\documentclass{article}
\usepackage[final, nonatbib]{neurips}
\usepackage{url} 
\usepackage{amsmath,amssymb,amsfonts}
\usepackage[style=ieee, backend=biber]{biblatex}
\bibliography{main}
\AtBeginBibliography{\footnotesize}
\usepackage{todonotes}
\usepackage{url}
\usepackage{subfig}
\usepackage{arydshln} 
\usepackage{wrapfig}
\usepackage[export]{adjustbox}


\usepackage{adjustbox}
\usepackage{textcomp}
\usepackage{xcolor}
\def\BibTeX{{\rm B\kern-.05em{\sc i\kern-.025em b}\kern-.08em
    T\kern-.1667em\lower.7ex\hbox{E}\kern-.125emX}}
\usepackage{tabularx,booktabs}
\newcolumntype{C}{>{\centering\arraybackslash}X} 
\usepackage{lipsum}
\usepackage{caption}
\usepackage{comment}
\usepackage{amsmath,amsfonts,amssymb,amsthm}
\usepackage{mathtools}
\usepackage{algorithm} 
\usepackage{algorithmicx, algpseudocode}
\usepackage{float}
\usepackage{todonotes}
\usepackage{tabularx}
\usepackage{xcolor}
\usepackage{hyperref}
\usepackage{graphicx}


\newtheorem{theorem}{Theorem}

\title{In Differential Privacy, There is Truth:\\ On Vote Leakage in Ensemble Private Learning}



\author{Jiaqi Wang$^{a}$ $^{b}$, Roei Schuster$^{b}$, Ilia Shumailov$^{b}$ $^{c}$, David Lie$^{a}$, Nicolas Papernot$^{a}$ $^{b}$ \\
        \small $^{a}$University of Toronto \\
        \small $^{b}$Vector Institute \\
        \small $^{c}$University of Oxford \\
}
\date{}

\begin{document}
\maketitle

\newcommand{\varun}[1]{\textcolor{red}{Varun: #1}}
\newcommand{\stephan}[1]{\textcolor{blue}{[Stephan: #1]}}
\newcommand{\jonas}[1]{\textcolor{magenta}{Jonas: #1}} 
\newcommand{\ilia}[1]{\textcolor{teal}{ilia: #1}} 
\newcommand{\my}[1]{\textcolor{teal}{Mohammad: #1}}
\newcommand{\franzi}[1]{\textcolor{violet}{franzi: #1}}
\newcommand{\ali}[1]{\textcolor{green}{Ali: #1}}
\newcommand{\david}[1]{\textcolor{brown}{David: #1}}
\newcommand{\jiaqi}[1]{\textcolor{orange}{Jiaqi: #1}}
\newcommand{\roei}[1]{\textcolor{purple}{Roei: #1}}
\newcommand{\anvith}[1]{\textcolor{brown}{anvith: #1}}



\begin{abstract}

When learning from sensitive data, care must be taken to ensure that training algorithms address privacy concerns. The canonical Private Aggregation of Teacher Ensembles, or PATE, computes output labels by aggregating the predictions of a (possibly distributed) collection of teacher models via a voting mechanism. The mechanism adds noise to attain a differential privacy guarantee with respect to the teachers' training data.
In this work, we observe that this use of noise, which makes PATE predictions stochastic, enables new forms of leakage of sensitive information. For a given input, our adversary exploits this stochasticity to extract high-fidelity histograms of the votes submitted by the underlying teachers. From these histograms, the adversary can learn sensitive attributes of the input such as race, gender, or age. Although this attack does not directly violate the differential privacy guarantee, it clearly violates privacy norms and expectations, and would not be possible \textit{at all} without the noise inserted to obtain differential privacy. In fact, counter-intuitively, the attack \emph{becomes easier as we add more noise} to provide stronger differential privacy. We hope this encourages future work to consider privacy holistically rather than treat differential privacy as a panacea.

\end{abstract}

\section{Introduction}
\label{sec:introduction}

The canonical Private Aggregation of Teacher Ensembles, PATE, is a  model-agnostic approach to obtaining differential privacy guarantees for the training data of ML models~\cite{papernot2018scalable,papernot2017semisupervised}, that is widely applied~\cite{GPATE,PATEspeech} and adapted~\cite{esmaeili2021antipodes, choquette-choo2021capc, PATEGradCompress} due to its comparatively favorable trade-off between differential privacy, utility, and ease of decentralization~\cite{choquette-choo2021capc}. In PATE, one considers an ensemble of independently trained teacher models. 
To generate a prediction, PATE first collects the predictions of these teachers to form a \textit{histogram} of votes. It then adds Gaussian noise to the histogram and only reveals the label achieving plurality. This label can be used directly as a prediction, or to supervise the training of a student model---in a form of knowledge transfer. 
Because PATE only reveals the label receiving the most votes, it comes with guarantees of differential privacy, i.e., the noisy voting mechanism allows us to bound how much information from the training data is potentially exposed~\cite{DLwDP2016}. 


But PATE does not explicitly protect from leakage of a key element in its inference procedure: the histogram of votes submitted by teachers. While the histogram is used internally and not directly exposed to clients, a careful examination of PATE reveals that information about the histogram leaks to clients via query answers.

The histogram can contain highly sensitive information, not the least of which is membership in minority groups which, if revealed, can be used to discriminate against individuals.
We demonstrate this by showing how an attacker, using the vote histogram of a PATE ensemble trained to predict an individual's income, can infer wholly different attributes such as their level of education, even when the attacker's instance does not contain any information related to education-level.
Why is this possible? At a high level, the histogram of votes can be interpreted as a relatively rich representation of the instance, that reveals attributes beyond what the ensemble was designed to predict.

Next, we ask, is this attack a realistic threat? We answer this in the affirmative by designing an attack that extracts PATE histograms by repeatedly querying PATE, and showing that it reconstructs internal histograms to near perfection.
Our attack builds on the fact that repeated executions of the same query produce the same internal histogram and a consistent distribution of PATE's noised answers corresponding to this histogram. Our adversary can thus sample this distribution many times via querying, and use it to reconstruct the histogram.

This implies that our attack relies on the stochasticity of PATE's output, which is a product of Gaussian noise, the very mechanism that was intended to protect privacy. In fact, we find that the larger the variance of noise added to the histogram votes, the more successful our adversary is in reconstructing the histogram. This is in sharp contrast with the known and expected effect in differential privacy, that higher noise scale generally leads to stronger privacy. Put simply: differential privacy makes our attack possible.




An astute reader may observe that histogram leakage does not violate the differential privacy guarantee, which only protects individual users in the training data, which is not compromised here.  
While it is absolutely true that our attack does not violate differential privacy, it clearly violates societal norms and user expectations that differential privacy is often incorrectly assumed to protect. 
The fact that differential privacy enables the leakage we exploit nicely underscores the distinction between technical definitions of privacy and common conceptions of privacy.


The attack is difficult to mitigate. Particularly, we show that it is stealthy in the sense that PATE's own accounting of ``privacy cost'' considers our attacker's set of queries ``cheap'', meaning that revealing their answers has a relatively small effect in terms of differential privacy. Consequently, PATE's privacy-spending monitoring does not prevent our attack. Our attack also performs only a moderate number of queries in absolute numbers, the same number used by common legitimate PATE clients, so a hard limit on queries would impede PATE's utility. We will discuss other mitigation approraches, which are not robust and/or not always usable.

To summarize, our contributions are as follows:
\begin{itemize}
    \item We posit the novel threat of extracting PATE's internal vote histograms. We observe and show that those contain sensitive information such as minority-group membership.
    \item We show that differential privacy is the cause for histogram-information leakage to PATE's querying clients. 
    \item We exploit this leakage to reconstruct the vote histogram. We achieve this by minimizing the difference between (a) the probability distribution of outcomes observed by repeatedly querying PATE and (b) an analytical counterpart that we derive.  
    \item We experiment with standard PATE benchmarks, showing that the attack can recover high-fidelity histograms while using a low number of queries that remain well within PATE's budget intended to control leakage.
\end{itemize}

\newcommand{\norm}[1]{\left\lVert#1\right\rVert}
\newcommand{\brac}[1]{\left(#1\right)}
\newcommand{\curbrac}[1]{\left\{#1\right\}}
\newcommand{\sqbrac}[1]{\left[#1\right]}
\newcommand{\abrac}[1]{\langle#1\rangle}
\newcommand{\size}[1]{\left|#1\right|}

\label{ssec:pate}
\section{Vote Histograms are Sensitive Information}
\label{sec:sensitive}

We consider an ensemble's vote histogram, such as those computed internally in PATE. Clearly, such histograms contain a lot more information on PATE's innerworkings than simply its revealed decision, but it is important to clarify that there are common contexts in which this leakage can actually be used to hurt individuals as they contain sensitive information about them.

As a prominent example, minority-group membership often leaks via histograms, and can of course be used to discriminate against group members. To understand this, let's consider a minority group that is under-represented in the training data distributed across PATE's teachers. Each teacher observes some outliers and mis-representitive phenomena such as coincidental correlations or out-of-distribution examples. When data on group members is scarce, each model will tend to over-fit to the outlier phenomena within its own data, creating inter-model inconsistencies and resulting in disagreement, or low consensus, when predicting on similar inputs at test time---which readily presents itself on vote histograms. Thus, \emph{we expect histograms to reveal members of minority group members via low consensus values}. Next, we illustrate this via a simple experiment.

\paragraph{Extracting sensitive attributes from UCI Adult-Income histograms.}
We now simulate an attack that receives the vote histogram of a salary-predictor ensemble and uses it to detect a small minority of the population, specifically, PhD holders. Following the above observation, our attack will simply classify all highly-consensus (consensus $>75\%$) predictions as non-PhD-holders, whereas low-consensus ($<75\%$) predictions will be classified as PhD holders. This is a heuristic attack that relies on intuition rather than learning the ensemble's behavior using a labeled dataset. On one hand, it may underestimate the attacker's ability to detect PhD holders; on the other hand, it does not require a labeled dataset and only assumes that the attacker sees the votes histogram.

We use UCI Adult-Income dataset~\cite{UCI_Adult}, containing around 41,000 data points with basic personal information on people such as age, work hours, weight, education, marital status, and more.
\begin{wrapfigure}{R}{0.4\textwidth}
\centering
\includegraphics[width=0.35\textwidth]{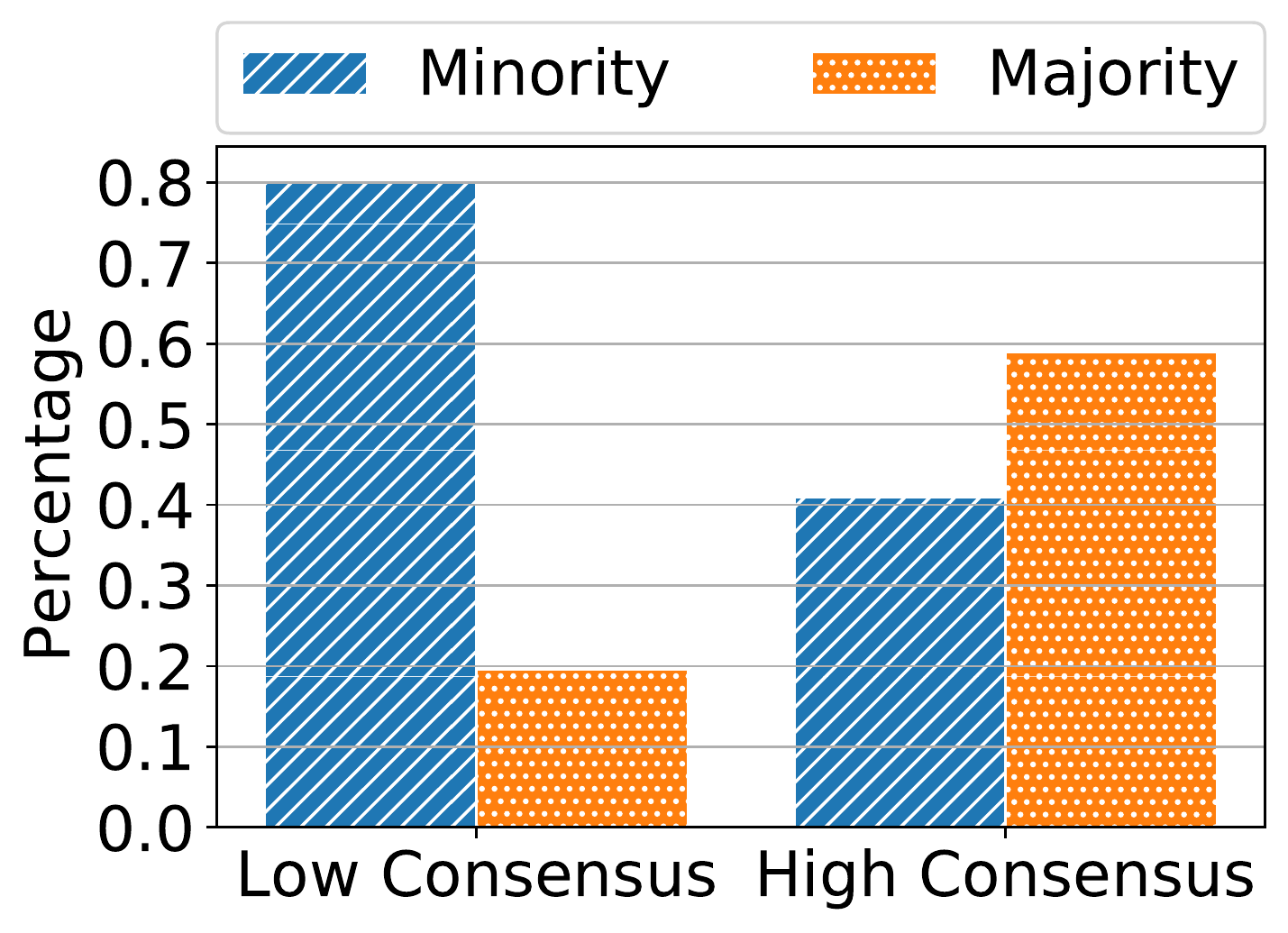}
\caption{High vs. low-consensus distributions of the PhD-detection attack: vote histograms of minority-group members present lower consensus, allowing an attacker to identify them.\label{fig:danger}}
\vspace{-2em}
\end{wrapfigure}
PhD holders form about 1\% of this dataset. We randomly selected 80\% of the dataset for training, and held out the rest for testing. We randomly partitioned the data into 250 disjoint sets. For each, we fitted a random forest model (using a hyperparameter grid search, see Appendix~\ref{sec:rf}) predicting whether income is above or below \$50,000. For both training and test data, we removed the data columns explicitly indicating education levels, that is, training and test individuals do not contain any feature that directly distinguishes PhD from non-PhD holders.

Figure~\ref{fig:danger} shows the distribution of high-consensus and low-consensus on the test set (to make the effect clearer, we balanced the minority and majority groups in the test set by randomly removing most of the non-PhD samples). We observe that low consensus indeed indicates minority-group membership. Our attacker's precision is not particularly high (75\% on the balanced set), but they can still use this signal to discriminate against minority groups.

\paragraph{End-to-end scenario and other attacks.}
Appendix~\ref{sec:endtoend} presents this attack in an end-to-end scenario where the attacker does not have direct access to the histogram, and has to first query a PATE instance to infer it, using our methodology in Section~\ref{sec:attack_method}.
More sophisticated attackers can look for distinctive histogram patterns that characterize certain groups; the attack should become more accurate as more models are added to the ensemble, refining the attacker's histogram measurement; and precision can be amplified if the attacker holds multiple samples that are known to belong to the same group. Further, we note that sensitive-attribute extraction is not the only example for when vote histograms leak sensitive information: an attack could use votes to try to infer dataset properties~\cite{ganju2018property} or distinguish between different partitions of the data associated with the different teachers in the ensemble.





\section{How to Extract PATE Histograms}
\label{sec:attack_method}

Having established that vote histogram leakage poses a risk to privacy and fairness, we proceed to provide a generic method for extracting vote histograms from PATE.

\subsection{Problem Formulation and Attack Model}
\label{sec:attack_model}




\paragraph{A primer on PATE.} The PATE framework begins by independently training an ensemble of  models, called teachers, on partitions of the private data. 
There is no particular requirement for the training procedure of each of these teacher models; the only constraint is that the partitions be disjoint.
Queries made by clients are answered as follows: (1) each teacher model predicts a label on the instance, (2) the PATE aggregator builds a histogram of class votes submitted by teachers, (3)  Gaussian noise is added to this histogram, and (4) the client receives the noised histogram's $\mathrm{argmax}$ class (henceforth \textit{result class}). This noisy voting mechanism gives PATE its differential privacy (DP) guarantee, in what is an application of the Gaussian mechanism~\cite{nissim2007smooth}. 

\begin{wrapfigure}{R}{0.45\textwidth}
\centering
\includegraphics[width=0.4\textwidth]{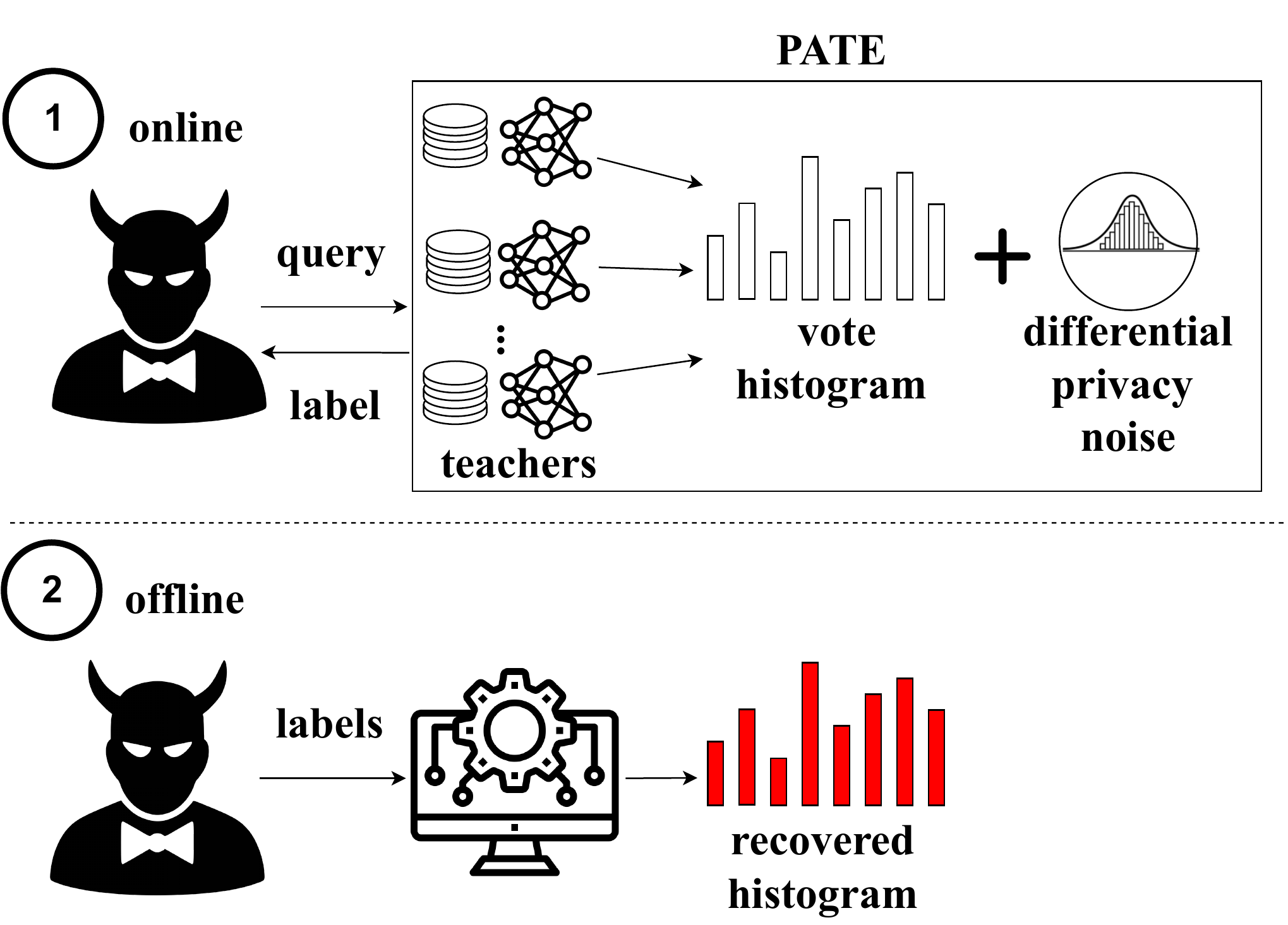}
\caption{In an online phase, the attacker sends a specific query to PATE repeatedly and receives labels output by the noisy argmax. Offline, the attacker uses the labels to recover the histogram by constructing and solving an optimization problem.\label{fig:workflow}}
\vspace{0.5em}
\end{wrapfigure}

To preserve differential privacy, PATE tracks the \textit{privacy cost} of the set of past queries, and stops answering queries once the cost surpasses the \textit{privacy budget}. The cost computation is parameterized by a size $\delta$. The key differential privacy guarantee of PATE can be stated as follows: for a given set of queries with cost $\varepsilon$, PATE is $(\varepsilon, \delta)$ differentially private. Put succinctly, $\varepsilon$ bounds an adversary's ability to distinguish between any adjacent training datasets, whereas $\delta$ bounds the (usually small) probability, over PATE's randomness, of this bound not holding. We defer additional details on differential privacy in PATE to Appendix~\ref{sec:dp}.

\paragraph{Attacker's motivation.}
Our attacker's goal is to recover the histogram of the vote counts when PATE labels an instance. 
Formally, given $N$ predictors $\curbrac{P_1,...,P_N}$ and a target input $a$, our attacker wants to infer $H\equiv Count(P_1(a),\dots,P_N(a))=[h_1,\dots, h_c]$ where $Count$ counts the number of appearances of each element in $\sqbrac{c}$.
Vote histograms can be used to extract potentially-sensitive information about an instance, such as its race, gender, or religion (see Section~\ref{sec:sensitive}).


\paragraph{Attacker's access and knowledge.}
Our attacker can send queries to the aggregator and receive the label predicted by PATE (i.e., the output of the noisy voting mechanism). This may be possible because the aggregator willfully exposes the predictions of PATE, e.g., through a MLaaS API. Alternatively, fully-decentralized implementations of PATE have been proposed where the central aggregator is replaced with a cryptographic multi-party computation protocol~\cite{choquette-choo2021capc}, and its output is exposed directly. Figure~\ref{fig:workflow} visualizes the workflow of our attack. 

\begin{wrapfigure}{R}{.5\textwidth}
\vspace{-0.3in}
\hfill
\begin{minipage}{\linewidth}
\scriptsize
  \begin{algorithm}[H]
\caption{Attack pseudocode}
    \begin{algorithmic}[1]
\caption{Attack pseudocode}
\label{alg:static}
\renewcommand{\algorithmicrequire}{\textbf{Input:}} 
\renewcommand{\algorithmicensure}{\textbf{Output:}}
\Require 
\item $N\in \mathbb{N}$ \Comment{total number of teachers (see Section~\ref{sec:attack_model} for why this is needed)}
\item $\mathbb{O}$ \Comment{PATE instance}
\item $T, \lambda$ \Comment{optimization termination threshold and learning rate}
\Ensure $\hat{H}$
\State $S \leftarrow sample(\mathbb{O},M)$ \Comment{sampling PATE $M$ times and storing into $S\in {1,...,K}^{M}$}
\For{\text{$i=1,2,\dots, M$}}
\For{$j = 1,2,\dots, c$}
\State $q^i_j=int(S^i==j)$ \Comment{$q^i_j=1$ if $S^i=j$, 0 otherwise}
\EndFor
\EndFor 
\State $\bar q \leftarrow 0^c$ \Comment{initialization of $\bar q$ with a 0 vector}
\For{\text{$i=1,2,\dots, M$}}
\State $\bar q[i] \leftarrow \frac{1}{M}\sum_{j=1}^M q_i^j$ 
\EndFor
\State $\hat{H} \leftarrow 0^c$ \Comment{initialization of $\hat{H}$, here we use an all-zero array of length $c$}
\While {$\norm{Q(\hat{H})-\bar q}_2> T$}
\State $\hat{H}\leftarrow \hat{H}-\lambda \nabla_{\hat{H}} \norm{Q(\hat{H})-\bar q}_2$
\EndWhile
\State $\hat{H} \leftarrow \hat{H} + [\frac{N-\sum \hat{H}}{c}]^c$ \Comment{shift $\hat{H}$ to sum to $N$}
\Return $\hat{H}$
\end{algorithmic}
  \end{algorithm}
\end{minipage}
\end{wrapfigure}

In PATE, the parameters (mean, variance) of the noise added during aggregation are public domain~\cite{papernot2017semisupervised}; we therefore assume the attacker knows them. We also assume the attacker knows the number of teacher models $N$, which may or may not be public. This assumption is only necessary to shift the attacker's learned distribution by a constant to attain a low $L_1$ approximation error when reconstructing histograms (Section~\ref{sec:attack_method}). We note that the attacker could just as easily exploit the leakage (e.g. to learn sensitive attributes or differentiate between training sets) without it but we chose to instead make this assumption to simplify result presentation and interpretation.\footnote{Indeed, we could avoid this assumption while still retaining low error if we measured the attacker's error with shift-invariant distances, like Pearson correlation.}

{}

\subsection{Our histogram reconstruction attack}



The idea behind the attack, given in pseudo-code in Algorithm~\ref{alg:static}, is as follows: let $Q$ be a function that computes output-class probability distribution of PATE given a vote histogram $H$. First, our attacker will sample PATE to find an estimate for this distribution $\bar{q}\approx Q(H)$. Second, the attacker will use gradient descent to find $\hat{H}$ that minimizes the Euclidean distance between $Q(\hat{H})$ and $\bar{q}$. Finally, they shift the estimated histogram $\hat{H}$ by a constant to account for the number of teachers (this step assumes the number of teachers is known, but is done mostly for presentation purposes, see Section~\ref{sec:attack_model}). We now detail these 3 steps.

\paragraph{Step 1: Monte Carlo approximation.}
The first step will sample PATE $M$ times and estimate the distribution over PATE's outputs $\bar{q}\approx Q(H)$ by setting each class probability as its Monte Carlo estimated mean frequency, i.e. $\bar q_i\gets \frac{1}{M}\sum_{j=1}^M q^j_i$ where $q^j_i$ indicates whether class $i$ was sampled in the $j$th step. By the law of large numbers, as $M$ increases, $\bar q_i$ converges to $i$'s sampling probability $\sqbrac{Q(H)}_i$, and we can expect the attacker's estimate produced in the next steps to be more accurate. Our attacker would want to increase $M$ as much as possible, until they exceed PATE's privacy budget.

Indeed, in PATE, the privacy leakage expended by each individual query can then be composed over multiple queries to obtain the total privacy cost $\varepsilon$ needed to answer the set of queries. Once  the total privacy cost $\varepsilon$ exceeds a maximum tolerable privacy budget, PATE must stop answering queries to preserve differential privacy. Section~\ref{sec:evaluation} shows that the attack succeeds for values of $M$ that remain well below PATE's privacy budget, and are also moderate in absolute value, as they are similar to the query number of student models that use PATE.

\paragraph{Step 2: constructing the optimization objective.}
Our attacker wants to find $\hat{H}$ such that $\norm{Q(\hat{H})-\bar{q}}_2$ is minimized where $\norm{\cdot}_2$ denotes the Euclidean norm. Given a (differentiable) closed-form expression for $Q$, it becomes natural to program and solve this with modern gradient-based optimization frameworks. Theorem~\ref{thm:thm1} provides a closed form expression; and our attacker will use a differentiable approximation of this expression, as explained below.

\begin{theorem}
\label{thm:thm1}
Let $H=[H_1,\dots,H_c]$ be the vote histogram for the $c$ classes, and let PATE's Gaussian-mechanism function $Agg(H) \equiv \mathrm{argmax}\{H_i+\mathcal{S}_i \}$ where $\mathcal{S}=[\mathcal{S}_1,\dots,\mathcal{S}_M]$ is a vector of $M$ samples from a zero-mean normal distribution with variance $\sigma^2$.
Then the probability that the randomized aggregator outputs the class $k$ is given by $\sqbrac{Q(H)}_k=\mathbb{P}(Agg(H)=k)=\int_{-\infty}^{\infty}\prod_{i=1}^{c, i\neq k} \Phi_i(\alpha)\phi_k(\alpha)\mathrm{d}\alpha$ where $\Phi_i(\cdot)$ is the cumulative probability distribution (CDF) of $\mathcal{N}(H_i, \sigma^2)$ (normal distribution with mean $H_i$ and variance $\sigma$) and $\phi_k(\cdot)$ is the probability density function (PDF) of $\mathcal{N}(H_k, \sigma^2)$.
\end{theorem}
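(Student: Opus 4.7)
The plan is to compute the probability that class $k$ wins the noisy argmax by conditioning on the value of the $k$-th noisy count and then using independence of the noise samples across classes. Specifically, for each class $i$ let $X_i := H_i + \mathcal{S}_i \sim \mathcal{N}(H_i,\sigma^2)$; the $X_i$'s are mutually independent because the entries of $\mathcal{S}$ are i.i.d.\ by assumption. The event $\{Agg(H)=k\}$ is the event that $X_k > X_i$ for every $i\neq k$ (ties have Lebesgue measure zero in a product of continuous distributions, so they can be ignored).

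Next I would write the probability as an integral over the value of $X_k$. Using the law of total probability with density $\phi_k(\alpha)$ of $X_k$,
\[
\mathbb{P}(Agg(H)=k) \;=\; \int_{-\infty}^{\infty} \phi_k(\alpha)\,\mathbb{P}\!\left(\bigcap_{i\neq k}\{X_i<\alpha\}\,\Big|\,X_k=\alpha\right)\mathrm{d}\alpha.
\]
By independence of $X_k$ from the other $X_i$'s, the conditional probability equals the unconditional one, and by mutual independence of the remaining $X_i$'s the joint probability factors:
\[
\mathbb{P}\!\left(\bigcap_{i\neq k}\{X_i<\alpha\}\right) \;=\; \prod_{i\neq k}\mathbb{P}(X_i<\alpha)\;=\;\prod_{i\neq k}\Phi_i(\alpha).
\]
Substituting gives the claimed formula $[Q(H)]_k = \int_{-\infty}^{\infty}\prod_{i\neq k}\Phi_i(\alpha)\,\phi_k(\alpha)\,\mathrm{d}\alpha$.

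The argument has essentially no hard step; it is a textbook application of the fact that the maximum of independent continuous random variables has a distribution whose density at $\alpha$ is a sum over $k$ of $\phi_k(\alpha)\prod_{i\neq k}\Phi_i(\alpha)$. The only items deserving explicit mention are (i) justifying that the noise variables are independent across classes (this is the definition of PATE's Gaussian mechanism as stated in the theorem, where $\mathcal{S}$ is a vector of $M$ i.i.d.\ samples, and I would note a minor notational issue that $M$ in the theorem statement should in fact be $c$), and (ii) noting that the event of ties has probability zero so that strict versus non-strict inequalities are interchangeable. If a more measure-theoretic treatment were desired, one could invoke Fubini's theorem to exchange the integral over $\alpha$ with the product of tail probabilities, but for Gaussian integrands this is immediate since all integrands are nonnegative and measurable. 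Overall, the proof is routine; the main ``obstacle'' is simply being explicit about independence and the tie-zero-measure argument so that the factorization step is rigorous.
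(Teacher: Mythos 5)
Your proposal is correct and follows essentially the same route as the paper's proof: condition on the value of the $k$-th noisy count, use independence to factor the joint event $\bigcap_{i\neq k}\{X_i<\alpha\}$ into a product of CDFs, and integrate against the density $\phi_k$. Your write-up is in fact slightly more careful than the paper's (which informally writes $\mathbb{P}(g_k=\alpha)$ for the density and does not mention the measure-zero tie event), but the argument is the same.
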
 
\begin{proof} $Q(H)=\mathbb{P}(Agg(H)=k)$, is the probability that $H_k+S_k={\text{max}}\{Agg(H)\}$. For any $k$, $H_k+\mathcal{S}_k$ is a random variable that follows a normal distribution with mean equal to $H_k$ and variance equal to $\sigma^2$. Let $g_k=H_k+\mathcal{S}_k$, then $g_k\sim \mathcal{N}(H_k, \sigma^2)$.
$\sqbrac{Q(H)}_k$ is the probability of $g_k$ is greater than $g_j$, $\forall j\in \{1,\dots k-1, k+1,\dots c\}$
{\small
\begin{equation*}
    \begin{aligned}
        \sqbrac{Q(H)}_k 
        &= \mathbb{P}(Agg(H)=k)\\ 
        &= \mathbb{P}(g_k>g_1,\dots,g_k>g_{k-1}, g_k>g_{k+1},\dots, g_k>g_c) \\
        &=\int_{-\infty}^{\infty}\prod_{i=1}^{c, i\neq k} \mathbb{P}\left(g_{i}<\alpha \mid g_{k}=\alpha\right)\mathbb{P} (g_{k}=\alpha)\mathrm{d}\alpha\\
        &=\int_{-\infty}^{\infty}\prod_{i=1}^{c, i\neq k} \Phi_i(\alpha)\phi_k(\alpha)\mathrm{d}\alpha\\
    \end{aligned}
\end{equation*}  
}
\end{proof}

The expression in Theorem~\ref{thm:thm1} is not usable in automatic differentiation and optimization frameworks; we therefore use an approximation of the integral by the trapezoid formula. We select points with higher probability and sum up their values to get an approximation of the integral with infinite bounds. Then we decide what values to select. In the integral $\int_{-\infty}^{\infty}\prod_{i=1}^{m, i\neq k} \Phi_i(\alpha)\phi_k(\alpha)\mathrm{d}\alpha$, $\alpha$ is the value of $g_k \sim \mathcal{N}(H_k, \sigma^2)$. Therefore $\alpha$ has the highest probability at $H_k$, and has the higher probability closer to $H_k$. More specifically, properties of the normal distribution give us that $\mu \pm 6*\sigma$ covers 99\% of the values of Gaussian random variable $z\sim \mathcal{N}(\mu, \sigma)$. Therefore values of $\alpha$ between $H_k \pm 6*\sigma$ cover 99\%  of the integral area. Therefore, $$\int_{-\infty}^{\infty}\prod_{i=1}^{c, i\neq k} \Phi_i(\alpha)\phi_k(\alpha)\mathrm{d}\alpha \approx \sum_{H_k-6\sigma}^{H_k+6\sigma}\prod_{i=1}^{c, i\neq k} \Phi_i(\alpha)\phi_k(\alpha),$$ which is differentiable and is handled well by most automatic differentiation packages.

\paragraph{Step 3: accounting for the number of teachers.} 
The distribution estimate produced by our optimization may be skewed by a constant because $\sqbrac{Q(H)}_k$ only depends on the differences between $g_k$ and $g_1, \dots, g_{k-1}, g_{k+1}, \dots, g_{c}$, so the attacker shifts each element of $\hat{H}$ by $(N-\sum \hat{H})/c$ so that the new histogram $\hat{H}$ sums up to $\sum \hat{H}+c*(N-\sum \hat{H})/c=N$. Theorem~\ref{theorem:2} in Appendix~\ref{sec:shift} provides proof that shifting $Q(\hat{H})$ by a constant does not affect $Q(\hat{H})$.

{}

{}

\section{Evaluation}
\label{sec:evaluation}
We evaluate our attack against instantiations of PATE on common benchmarks. We  show that the  extracted histograms  only  differ slightly from the true ones underlying PATE's decision. This is despite the low privacy cost of the attacker's queries, which remains well within budgets  enforced by common PATE instantiations. We also quantify  the impact of the choice of scale for the noise being added to preserve DP: we show that \emph{higher noise values result in increased attack success} for a given  number of queries. We offer an hypothesis to explain this ostensibly surprising observation.

\subsection{Experimental Setup}
\label{ssec:experimental_setup}

\paragraph{Data.}
We use the experimental results from Papernot et al.~\cite{papernot2018scalable} to simulate our attack environment. 
Papernot et al. released the histograms obtained by PATE using 250 teachers for two 10-class computer-vision benchmarks, MNIST~\cite{mnist} and SVHN~\cite{svhn}.  
There are 9,000 histograms generated by MNIST experiments and 26,032 histograms generated by SVHN experiments, corresponding to the sizes of these datasets' test sets.

We define a histogram's \textit{consensus} as its maximum value, and divide each dataset into three equal-sized groups corresponding to high consensus, medium consensus, and low consensus. Figure~\ref{fig:division} illustrates this. We sample five histograms randomly from each group, and mount our attack for various noise levels.

\begin{figure}
\centering
\includegraphics[width=0.4\textwidth]{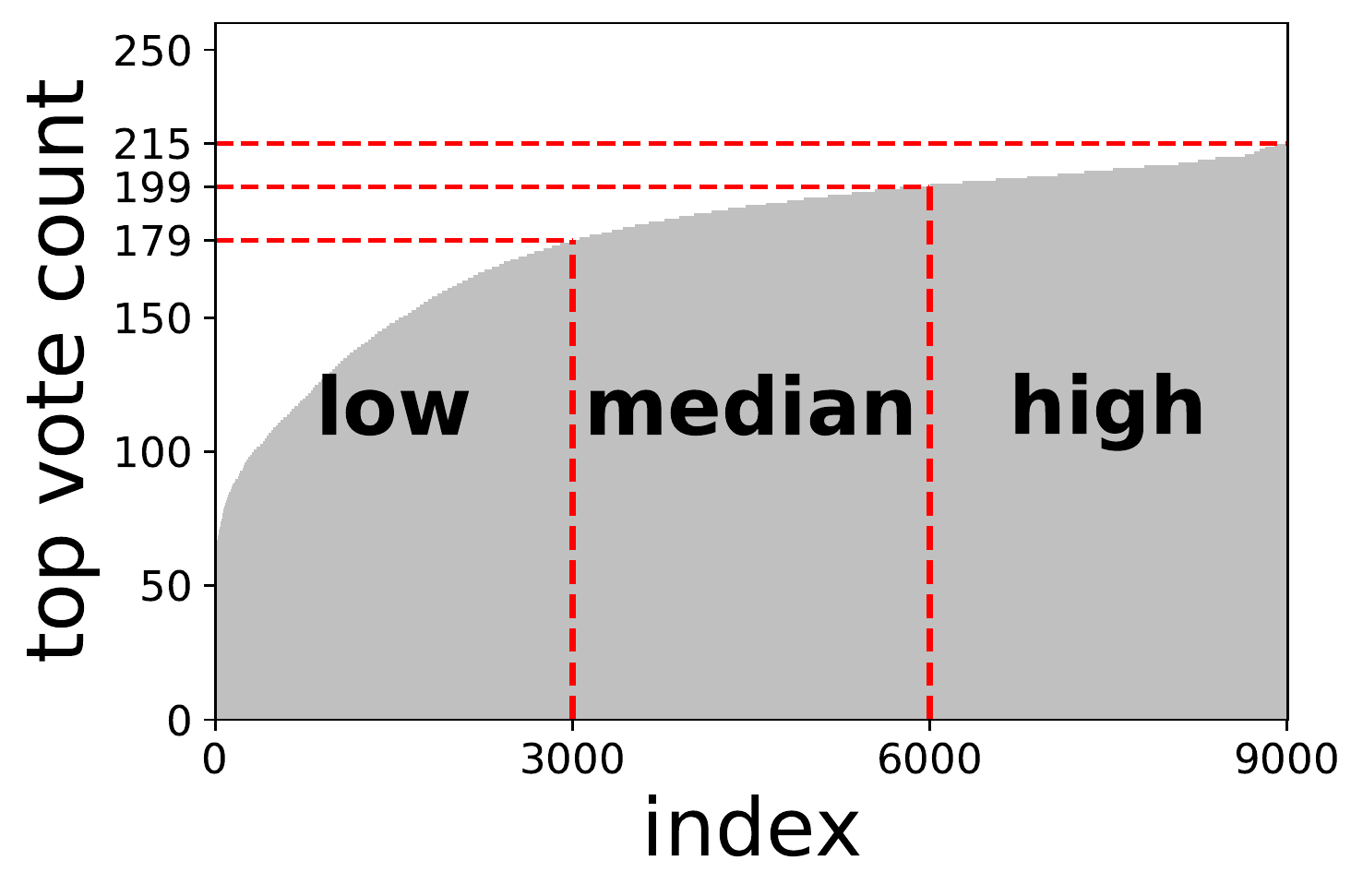}
\includegraphics[width=0.4\textwidth]{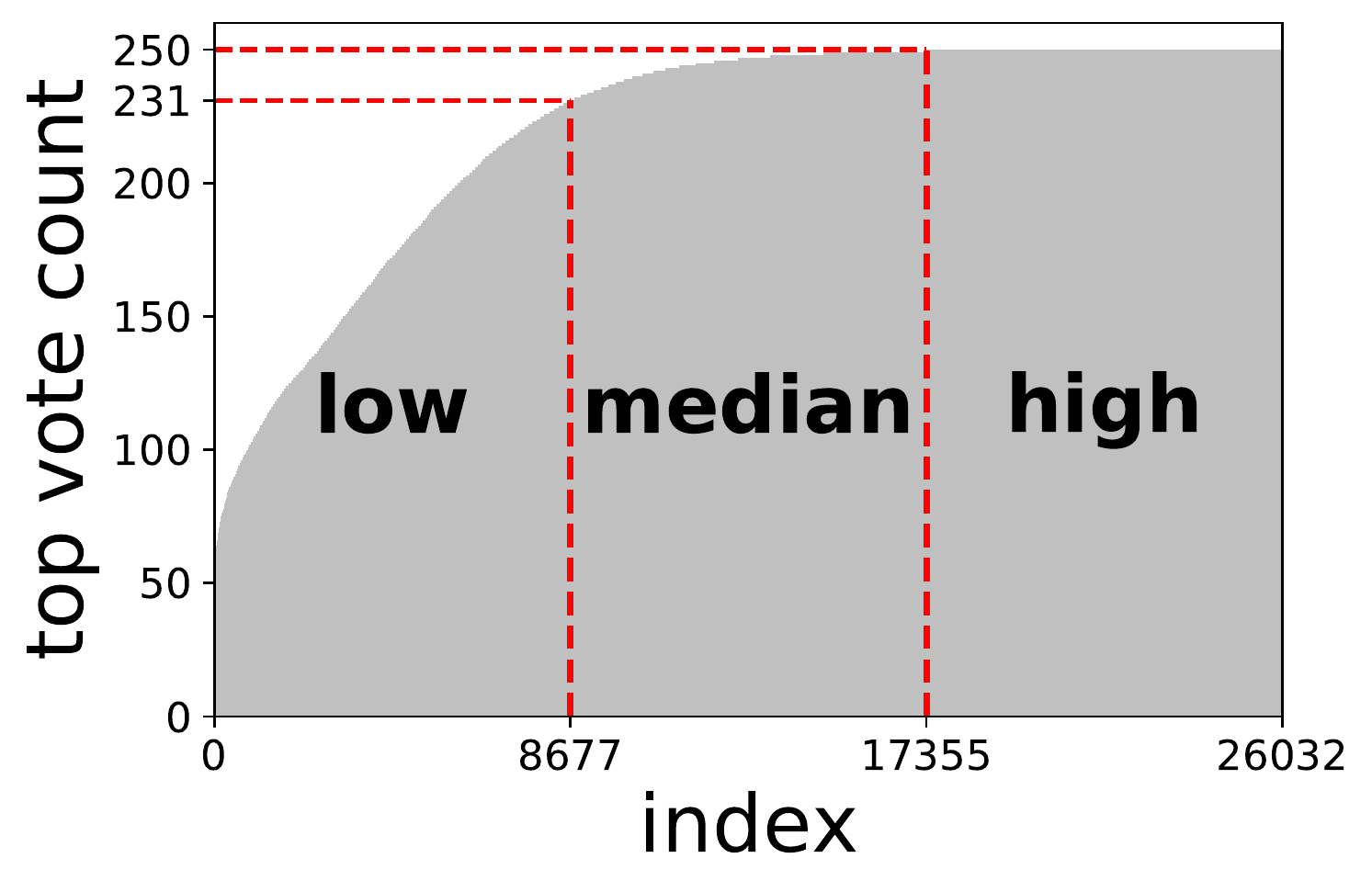}
\caption{Divisions of the 9,000 and 26,032 histograms of MNIST (left) and SVHN (right) datasets into 3 consensus levels, measured by top-agreed label percentage. The dashed red lines delineate the 33.3\% and 66.7\% quantiles.\label{fig:division}}
\end{figure}

\paragraph{Attack parameterization.}
We simulated attackers with two types of query limits: first, an attacker limited by PATE's canonical privacy budget; we used the parameterization from Papernot et al.~\cite{papernot2018scalable}, i.e. budgets of 1.97 and 4.96 for MNIST and SVHN and $\sigma=40$. Second, an attacker with a hard limit of $10^4$ queries; this is a moderate number of queries for clients wishing to train their own ``student'' model using the aggregator's labels (see ~\cite{papernot2018scalable,papernot2017semisupervised}). We applied this attack against PATE instantiations for MNIST and SVHN with noise levels $\sigma\in \curbrac{40,60,80,100}$.

For optimization (see Section~\ref{sec:attack_method}), we use an adaptive learning rate: at the beginning of training, we use a learning rate of $\frac{10}{\norm{\nabla_{\hat{H}} J}_2}$, where $J=Q(\hat{H})-\bar{q}$ is the optimization objective. As the optimization starts to converge, $\frac{10}{\norm{\nabla_{\hat{H}} J}_2}$becomes too large so we switch to a learning rate of $\frac{1}{\norm{\nabla_{\hat{H}} J}_2}$. 
This results in changes to the histogram of the magnitude of one vote for each update. We use 0.01 as a threshold on the loss to establish convergence, and thus when $\|J\|_2 < 0.01$, we stop optimizing. For the attacks against canonical settings, we stopped once estimated histograms started presenting negative values, which we found to be a slightly better strategy. (We could also try to constrain it to only-positive values; we discuss improving this optimization procedure further in Section~\ref{sec:discussion}).

\paragraph{Metrics.}
For every attack, we measured the \textit{error rate} and \textit{privacy cost}. The error rate is defined as the normalized $L_1$ distance between the ground-truth histogram $H=[H_1,\dots,H_c]$ and our attacker's estimate $\hat{H}=[\hat{H_1},\dots,\hat{H_c}]$, i.e., $\sum_i\left|H_i-\hat{H}_i\right|/\brac{2\sum_i\left|H_i\right|}$. (While the optimization minimizes Euclidean distance, we report L1 errors because they can be interpreted as corresponding to the number of mis-counted votes.)

We define and compute the privacy cost incurred by the adversary using established practices. At a high level (see details in~\cite{papernot2018scalable}), we model PATE as a R\'enyi-differentially-private mechanism and leverage known privacy-preserving-composition theorems; we attain (non-R\'enyi) differential privacy via a known reduction from differential privacy to R\'enyi differential privacy. 

The parameter $\delta$ is set as $10^{-5}$ for MNIST and $10^{-6}$ for SVHN, following Papernot et al.~\cite{papernot2018scalable}.

\paragraph{Implementation}
Our implementation is provided in Python and the optimization uses the Jax library. Our code is open-sourced at \url{https://github.com/cleverhans-lab/monte-carlo-adv}. We ran the optimization on an Intel Xeon Processor E5-2630 v4; it takes about 2.5 hours to complete for a single histogram.

\subsection{Results}
\paragraph{Our attack has high performance within canonical privacy budgets.}
We first evaluate our attack on canonical PATE from Papernot et al.~\cite{papernot2018scalable}. Figure ~\ref{fig:mnist_defeat} and ~\ref{fig:svhn_defeat} show our attacker's error rates for the different histograms, averaging 0.11 on the MNIST setup and 0.05 on the SVHN setup.

\label{eval:noise_variance}
\paragraph{Our attack extracts high-fidelity histograms and has low privacy costs.}
Figure~\ref{fig:canonical} reports the performance of the privacy-budget limited attack; Figures~\ref{fig:mnist_noise} and~\ref{fig:svhn_noise} show our hard-query-limit attacker's error rate and query costs for different noise levels, i.e. values of $\sigma$. We observe that, across attacks, we attain very low error rates, often as low as 0.03, translating to 3\% of the votes being miscounted. For the hard-query-limit attack, privacy costs roughly range between 1 to 12, which is the order of magnitude for the budget one would plausibly use, for example to attain guarantees similar to Papernot et al.~\cite{papernot2017semisupervised} (which uses budgets of up to 8 in a directly comparable setting to ours) or Abadi et al.~\cite{abadi2016deep} (which also employs a $(8, 10^{-5})$-differentially private mechanism for MNIST). 

\begin{figure*}[h]
\centering
\centering
\subfloat[Error rates on attacking a canonical MNIST PATE with privacy budget = 1.97 and $\sigma=40$\label{fig:mnist_defeat}]{\includegraphics[width=0.47\textwidth]{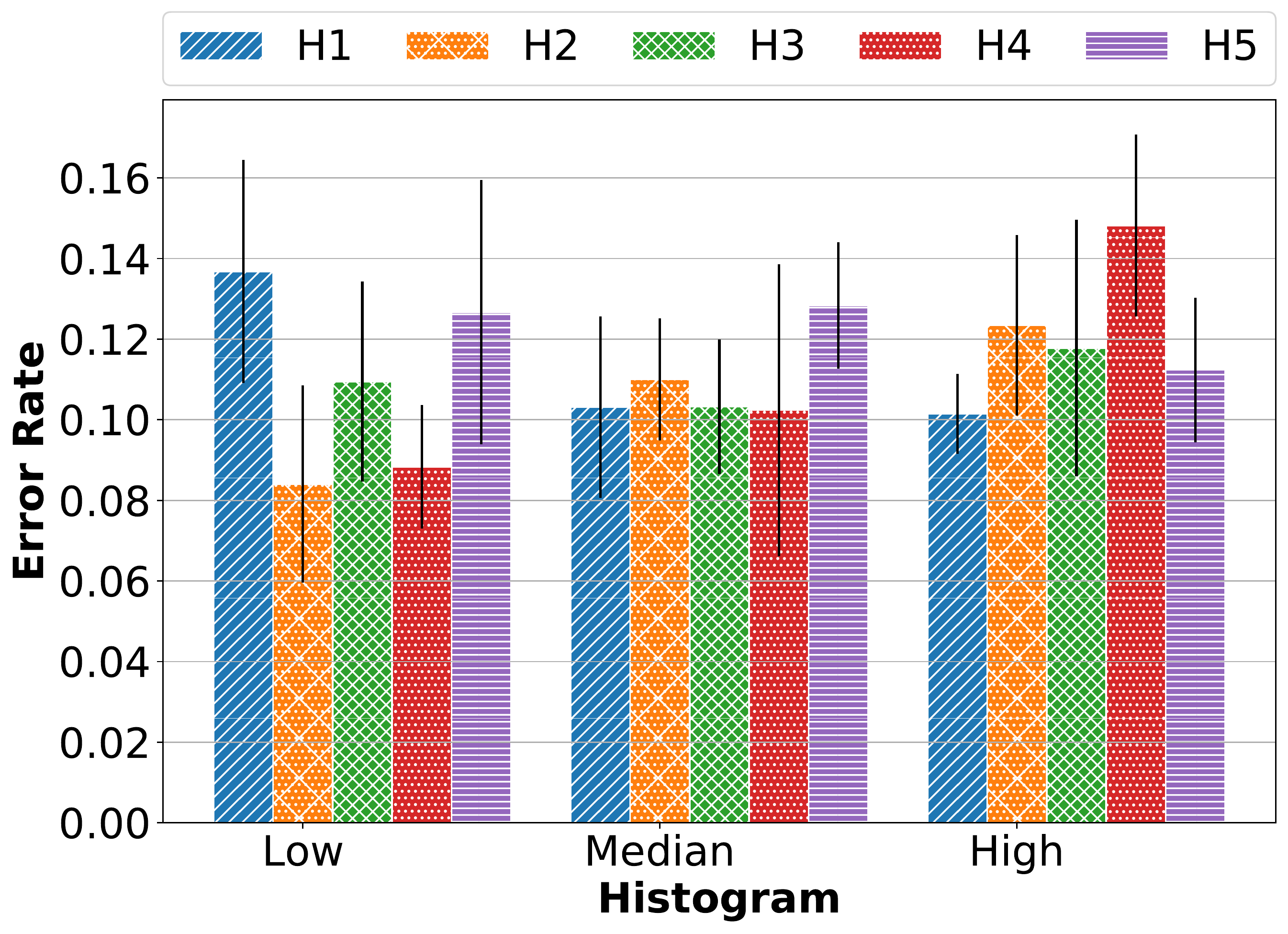}}\hfill
\subfloat[Error rates on attacking a canonical SVHN PATE with privacy budget = 4.96 and $\sigma=40$\label{fig:svhn_defeat}]{\includegraphics[width=0.47\textwidth]{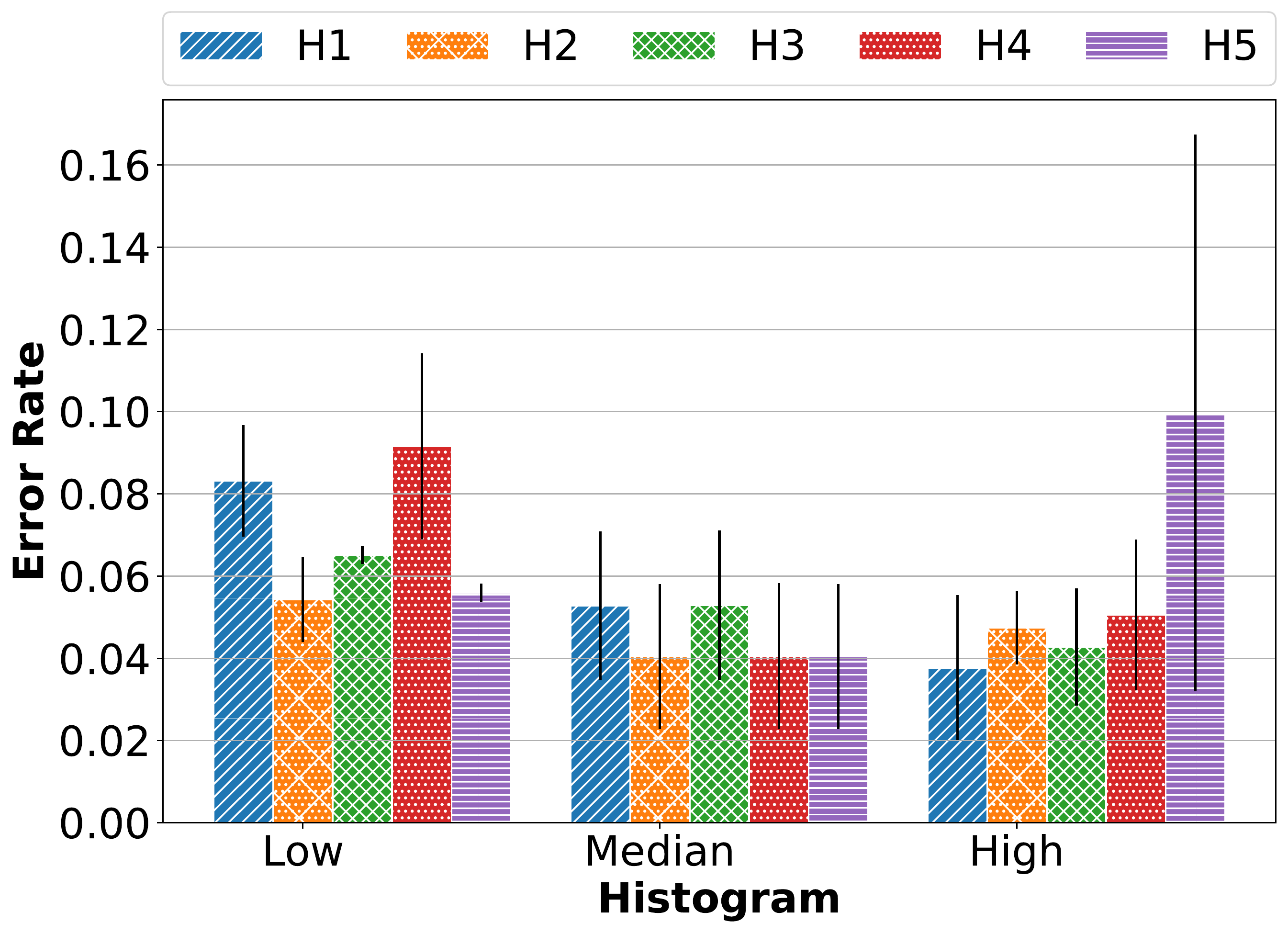}}
\caption{Error rates on budget-limited attack on the canonical PATE~\cite{papernot2018scalable}, for our 15 low/median/high-consensus sample histograms.\label{fig:canonical}}
\end{figure*}

\paragraph{Adding more noise helps the attacker.}
\label{sec:hypothesis}
Perhaps the most surprising result in this work is that the higher the noise scale, the lower the attacker's error is. This is \emph{not} necessarily aligned with using up more of the privacy budget. In fact, in many cases, increasing the noise decreases both the attacker's privacy cost and their error; Figure~\ref{fig:correlation} shows the correlation between cost and error.
\begin{wrapfigure}{R}{0.62\textwidth}
\centering
\includegraphics[width=0.59\textwidth]{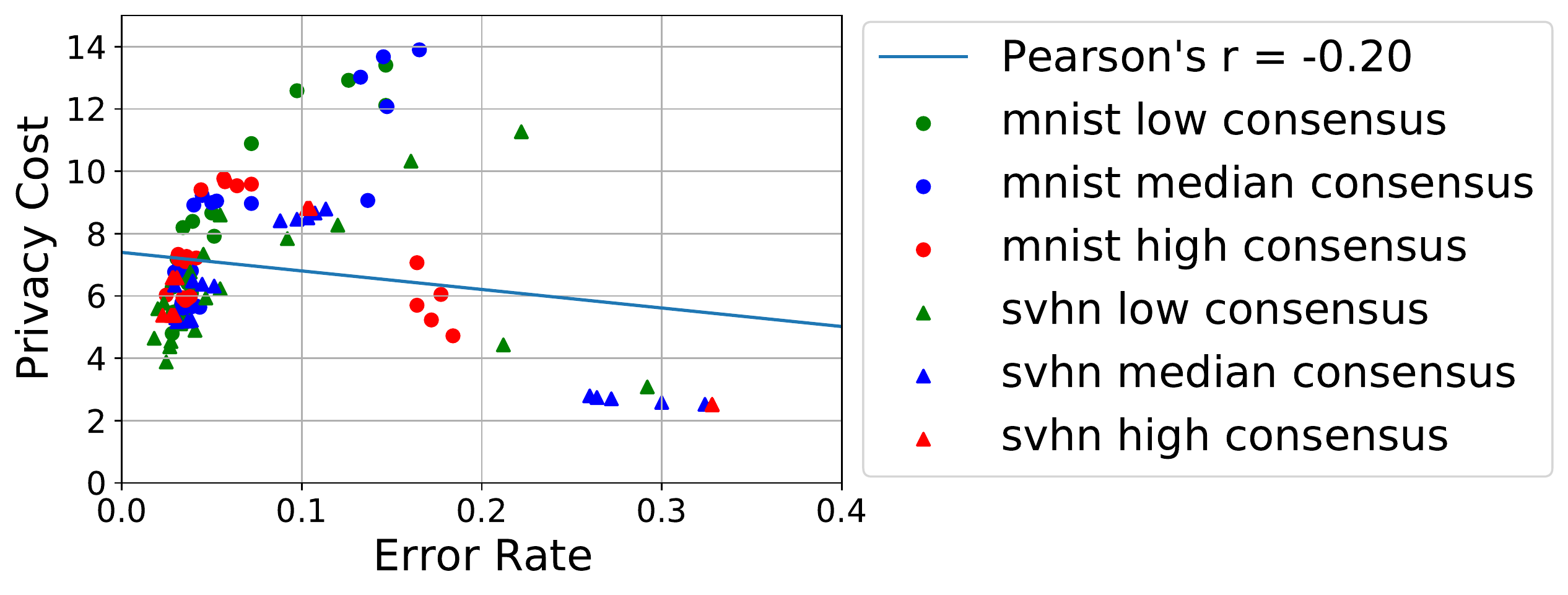}
\caption{Our attack's average error rate vs. privacy cost on the histograms extracted with $10^4$ queries. Weak inverse correlation implies cheaper attacks are often more accurate. \label{fig:correlation}}
\vspace{-1.5em}
\end{wrapfigure}

This is counter-intuitive, as larger Gaussian scales $\sigma$ usually correspond to tighter privacy guarantees. That is, more expected protection against attacks. Specifically, our Monte Carlo estimation should be less accurate when higher-variance noise is added, as convergence to the mean is slower. Nevertheless, our attack actually performs \emph{better} with higher noise levels.

\begin{figure}
\vspace{-0.2in}
\centering
  \includegraphics[clip,width=1\columnwidth]{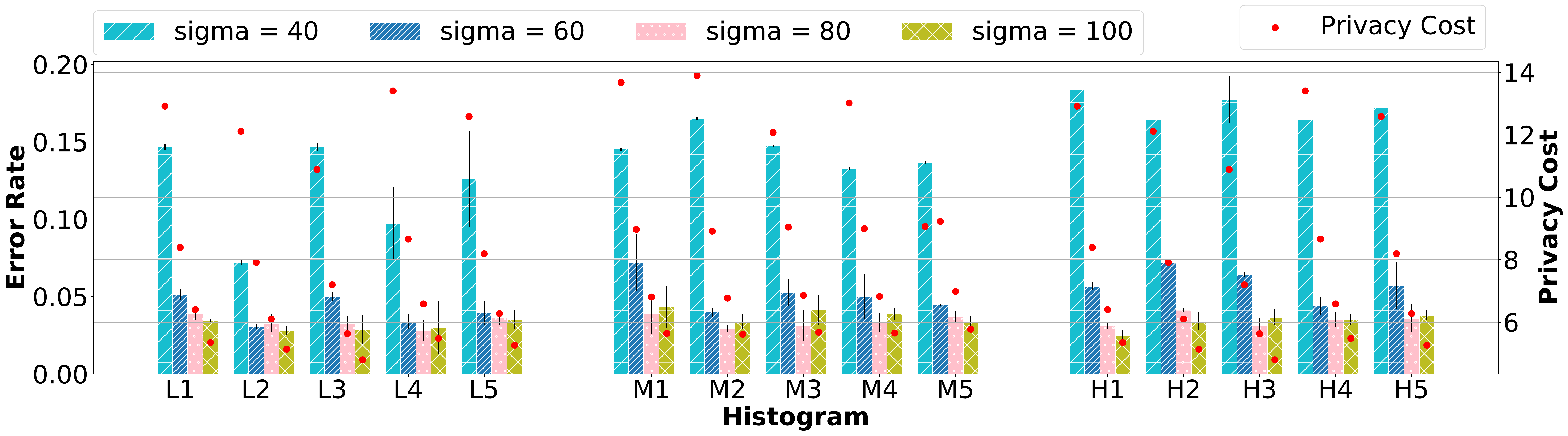}%

\caption{Our attack's error extracting 15 MNIST histograms with low/medium/high consensus (L1-5, M1-5, and H1-5 respectively) using different noise scales and a query limit of $10^4$. The red dots and the right axis show the privacy cost of the attack on each histogram.}
\label{fig:mnist_noise}
\end{figure}

\begin{figure}
\vspace{-0.1in}
\centering
  \includegraphics[clip,width=1\columnwidth]{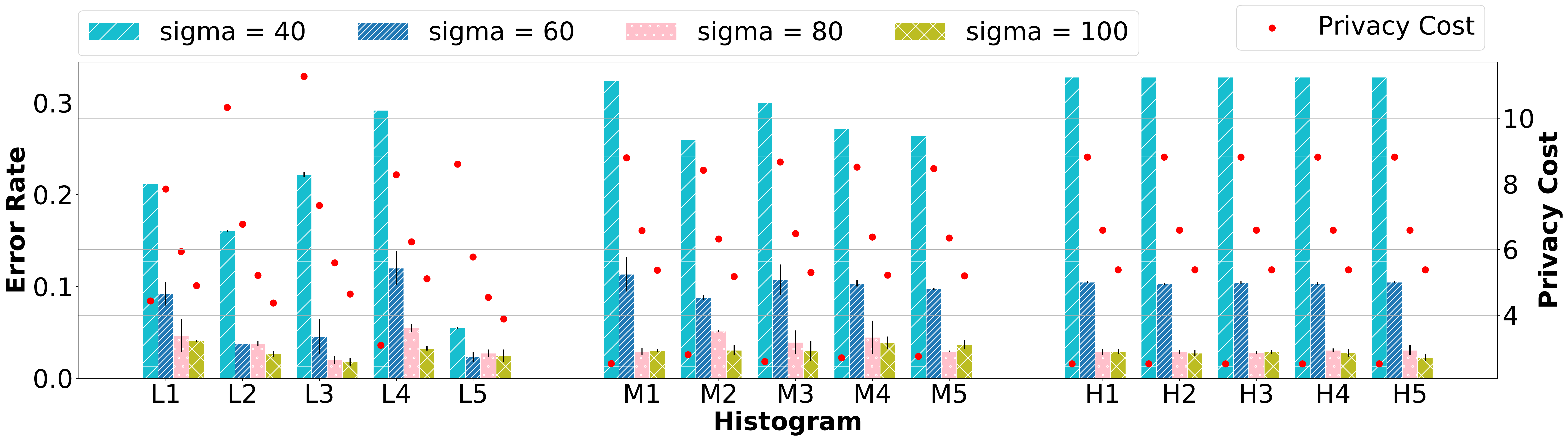}%

\caption{Our attack's error extracting 15 SVHN histograms with low/medium/high consensus (L1-5, M1-5, and H1-5 respectively) using different noise scales and a query limit of $10^4$. The red dots and the right axis show the privacy cost of the attack on each histogram.}

\vspace{-1.5em}
\label{fig:svhn_noise}
\end{figure}

To explain this, consider the aggregator's output distribution. When it is uniform, classes are sampled with equal probabilities, contributing equal information to each Monte Carlo estimator. Conversely, when some classes have a lower probability than others, their estimator will receive less samples. Sharp output distributions, for example, have a peak that essentially ``eclipses'' other classes. To illustrate this, consider the case where no noise is added at all; here, the output is always the plurality vote, and a black-box querying adversary cannot learn \emph{anything} about the histogram except its top-voted class, which is already known after a single query.

Our results indicate that the mitigation of this eclipsing effect by increasing the noise, can be more dominant than the adverse effect that increasing noise has on Monte Carlo convergence. Interestingly, this is not always reflected in PATE's privacy-cost score, which is often lower for setups that leak more on vote histograms. Technically, there is no contradiction: privacy cost measures differential privacy, which does not necessarily translate to protection against vote-histogram leakage. 

{\color{red}
\paragraph{Effect of Consensus}
\label{sec:consensuseffect}
}

\color{black}
\section{Discussion}
\label{sec:discussion}

\paragraph{Mitigation.}
The possibility of this attack is inherent to PATE's aggregation mechanism, as long as the attacker can make multiple queries to PATE. Our experiments in Section~\ref{sec:evaluation} show that (1) using tighter privacy budgets does not necessarily mitigate the attack, as there is no strong correspondence between the privacy cost and the attack's success, and (2) it would be hard to limit the number of queries some other way without crippling PATE's utility, because our attack is successful while using the same number of queries used in common scenarios from the literature.


Theoretically, the attack would be mitigated if PATE returned a consistent answer for each query. PATE can thus try to cache answers to past queries and not recalculate them. Unfortunately, this defense would be exposed to adversarial perturbations that try to evade the caching mechanism without affecting predictions, and would not be possible for settings that keep queries confidential and/or include decentralized aggregation~\cite{choquette-choo2021capc}.

Finally, we can try to prevent sensitive information from leaking onto vote histograms. Particularly, models that generalize well across subgroups will be more immune to an attacker inferring group membership via consensus. This reduces to the problem of \textit{subgroup fairness}, an active line of work with many proposed approaches~\cite{chawla2002smote, kearns2018preventing,kearns1994toward, mohri2019agnostic, cui2019class} but no silver-bullet solutions. 



\paragraph{Limitations.}
Empirical analysis of sensitive-attribute leakage onto vote histograms (Section~\ref{sec:sensitive}) can be expanded to improve more sophisticated attackers, other scenarios, and also other forms of sensitive information that can leak onto histograms. We instead focus our work on extracting histograms from PATE, noting that this can be used as a foundation for various different attacks.

A full optimization procedure takes a noticeably long time (roughly 10 minutes for a single step and 13 hours to convergence on a histogram), which prevented us from fully optimizing its hyper-parameter choices. This is however a limitation of our current experimental setup, not of the attack, bearing the main consequence that we are potentially under-estimating our attack's capabilities.


\paragraph{Related work.}
PATE is a widely-adopted framework for differentially-private ML, with myriad applications~\cite{GPATE,PATEspeech} and extensions~\cite{esmaeili2021antipodes, choquette-choo2021capc, PATEGradCompress}; our attack is generally applicable to many of those frameworks, which inherit their privacy analysis from PATE.

Another prominent decentralized ML framework, Federated Learning (FL)~\cite{fed}, has been extensively investigated from a privacy perspective. As we did for PATE in this work, prior work attacking FL uncovered numerous forms of leakage. For example, Hitaj et al. ~\cite{DBLP:journals/corr/HitajAP17} reconstructed the average training set representation of each classes; Geiping et al.~\cite{Inverting_JonasGeiping} reconstructed training data with high fidelity; Nasr et al.~\cite{2019Nasr} mounted a membership inference attack against the clients; Wang et al.\cite{DBLP:ZhiboUserLevel} showed how a malicious server could distinguish multiple properties of data simultaneously; and Melis et al.~\cite{DBLP:journals/corr/abs-1805-04049} inferred the clients' training data sensitive properties. 
These prior efforts all focus on FL, and are orthogonal to ours. We are the first to evaluate any attack against PATE.

\paragraph{Conclusion}
We are the first to audit the confidentiality of PATE from an adversarial perspective. Our attack extracts histograms of votes, which can reveal attributes of the input such as race or gender, or help attackers characterize teacher partitions.
The attacker's success is not highly correlated with their queries' privacy cost, which is monitored by PATE. Thus, mitigations of this attack are nontrivial and/or significantly hinder prediction utility.
Particularly, using larger Gaussian noise, even when it fortifies the differential privacy guarantee, actually increases risk to the confidentiality of the vote histogram. 
This surprising tension demonstrates that care must be taken to analyze the protection differential privacy provides within a given threat model, rather than treat it as a silver bullet protecting against any form of leakage.


\paragraph{Broader Impact}
Our work studies information leakage in a widely-adopted system, thus promoting our understanding of its risks. Our adversarial method can be used by developers and auditors to evaluate the confidentiality and privacy promises of PATE-based frameworks.\\
Our observation that differential privacy does not prevent but rather enables the attack is the first of its kind in that it reveals a discrepancy between differential privacy and societal norms of privacy. Characterizing this distinction is essential to building technology that uses technical definitions of privacy as an instrument to protect privacy norms.

\section*{Acknowledgments}
We would like to acknowledge our sponsors, who support our research with financial and in-kind contributions: Amazon, CIFAR through the Canada CIFAR AI Chair program, DARPA through the GARD program, Intel, Meta, Microsoft, NFRF through an Exploration grant, NSERC through the Discovery Grant, the OGS Scholarship Program, a Tier 1 Canada Research Chair and the COHESA Strategic Alliance. Resources used in preparing this research were provided, in part, by the Province of Ontario, the Government of Canada through CIFAR, and companies sponsoring the Vector Institute. We also thank members of the CleverHans Lab for their feedback.

{
\printbibliography
}










\newpage

\appendix

\section{Differential Privacy}
\label{sec:dp}

An algorithm is said to be differentially private if its outputs on adjacent inputs (in our case, datasets) are statistically indistinguishable. 
Informally, the framework of differential privacy requires that the probabilities of an algorithm making specific outputs be indistinguishible on two adjacent input datasets. Two datasets are said to be adjacent if they only differ by at most one training record. The degree of indistinguishibility is bounded by a parameter denoted $\varepsilon$. The lower $\varepsilon$ is, the stronger the privacy guarantee is for the algorithm because it is harder for an adversary to distinguish adjacent datasets given access to the algorithm's predictions on these datasets. In the variant of differential privacy we use, we can also tolerate that the guarantee not hold with probability $\delta$. This allows us to achieve higher utility. 

\section{Shifting Distributions}
\label{sec:shift}
In Section~\ref{sec:attack_method}, we explain that we shift our histogram estimate by a constant to account for the number of teachers known to the attacker. The following theorem shows that the number of teachers does not affect the attacker's computation 

\begin{theorem}\label{theorem:2}
For two histograms,  $H^1=[h^1_1,\dots,h^1_m]$ and $H^2=[h^2_1,\dots,h^2_m]$, $Q^{H^1, \sigma} = Q^{H^2, \sigma}$ if $h^1_i-h^2_i=h^1_j-h^2_j$ for all $i, j = 1,\dots,m$.
\end{theorem}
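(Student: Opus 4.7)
}
The plan is to first reinterpret the hypothesis as saying that $H^1$ and $H^2$ differ by a global additive constant, and then to show that such a shift leaves the output distribution of the noisy argmax invariant. Concretely, setting $c \equiv h^1_1 - h^2_1$, the condition $h^1_i - h^2_i = h^1_j - h^2_j$ for all $i,j$ immediately gives $h^2_i = h^1_i - c$ for every $i$, so $H^2 = H^1 - c\,\mathbf{1}$.

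From here I see two routes. The short route is direct: by definition, $Agg(H) = \mathrm{argmax}_k\{H_k + \mathcal{S}_k\}$, so $Agg(H^2) = \mathrm{argmax}_k\{H^1_k - c + \mathcal{S}_k\}$. Since subtracting the same constant $c$ from every coordinate does not alter the argmax, we have $Agg(H^2) = Agg(H^1)$ as random variables on the same sample space, hence they induce the same distribution over classes and $Q^{H^1,\sigma} = Q^{H^2,\sigma}$.

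The longer but more self-contained route is to derive the equality from the closed-form expression in Theorem~\ref{thm:thm1}. Denote by $\Phi^{(1)}_i, \phi^{(1)}_k$ the CDF/PDF of $\mathcal{N}(h^1_i,\sigma^2)$ and similarly $\Phi^{(2)}_i, \phi^{(2)}_k$ for $\mathcal{N}(h^2_i,\sigma^2)$. The shift relation gives $\Phi^{(2)}_i(\alpha) = \Phi^{(1)}_i(\alpha+c)$ and $\phi^{(2)}_k(\alpha) = \phi^{(1)}_k(\alpha+c)$. Substituting into the formula for $[Q(H^2)]_k$ and performing the change of variables $\beta = \alpha + c$ (whose Jacobian is $1$, and which leaves the infinite limits unchanged) recovers exactly $[Q(H^1)]_k$. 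Since this holds for every class $k$, the two distributions coincide.

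There is essentially no obstacle here; the argument is a one-line observation about shift-invariance of the argmax, and the main thing to be careful about is simply stating the reduction from the pairwise-difference hypothesis to a single global shift $c$ cleanly. I would present the short route in the main text and, if space permits, mention the integral-based verification as a sanity check that the expression in Theorem~\ref{thm:thm1} is consistent with this invariance.
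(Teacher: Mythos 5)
Your proposal is correct, and your primary (short) route is genuinely more direct than the paper's argument. The paper proves the claim by considering the pairwise differences $g_i - g_j$, noting that each is normally distributed with mean $h_i - h_j$ and variance $2\sigma^2$, observing that these means are unchanged by a uniform shift, and then equating the joint probabilities $\mathbb{P}(g_k > g_j \ \forall j \neq k)$ for the two histograms. Your argument instead couples the two aggregators on the same noise realization $\mathcal{S}$ and observes that $\mathrm{argmax}_k\{H^1_k - c + \mathcal{S}_k\} = \mathrm{argmax}_k\{H^1_k + \mathcal{S}_k\}$, so $Agg(H^1)$ and $Agg(H^2)$ are equal as random variables, not merely equal in distribution. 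This buys you two things: it is a one-line argument requiring no distributional computation, and it quietly closes a small gap in the paper's write-up --- equality of the \emph{marginal} distributions of each pairwise difference does not by itself imply equality of the \emph{joint} probability of the intersection event; one needs the joint distribution of the difference vector to be shift-invariant, which is true but is not what the paper's displayed equations literally establish. Your second route (change of variables $\beta = \alpha + c$ in the integral of Theorem~\ref{thm:thm1}) is also correct and serves as a useful consistency check on that closed form, though it is not how the paper proceeds.
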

\begin{proof}
let $d = h^1_i-h^2_i=h^1_j-h^2_j$ for all $i, j = 1,\dots,m$.
\begin{equation*}
    \begin{aligned}
        \mathbb{P}(g_i^2>g_j^2) 
        &\sim \mathcal{N}((h^2_i-h_j^2), 2\sigma^2)\\ 
        &= \mathcal{N}((h^1_i+d)-(h_j^1+d), 2\sigma^2) \\
        &= \mathcal{N}(h^1_i-h_j^2, 2\sigma^2) \\
        &=\mathbb{P}(g_i^1>g_j^1) \\
    \end{aligned}
\end{equation*}
for all $i,j = 1,\dots,m$. 

\begin{equation*}
    \begin{aligned}
        Q^{H^1,\sigma}_k 
        &= \mathbb{P}([g^1_k>g^1_1,\dots,g^1_k>g^1_{k-1}, \\
        & g^1_k>g^1_{k+1},\dots, g^1_k>g^1_m])\\
        &= \mathbb{P}([g^2_k>g^2_1,\dots,g^2_k>g^2_{k-1}, \\
        & g^2_k>g^2_{k+1},\dots, g^2_k>g^2_m])\\ 
        &= Q^{H^2,\sigma}_k \\
    \end{aligned}
\end{equation*}

\end{proof}



What Theorem~\ref{theorem:2} states is, if the difference between two histograms is uniform, then the probability distribution of the outcomes is the same. With the support of Theorem~\ref{theorem:2}, $H$ can be safely shifted by a constant amount to sums up to the number of teachers, $N$. 

\section{Chosen histograms for evaluation}
Table~\ref{tab:details} shows the histograms we chose for evaluation in the 3 consensus-level categories.

\begin{table}[H]
\adjustbox{max width=\columnwidth}{%
\centering
\begin{tabular}{lll} 
\toprule
& \multicolumn{1}{c}{\text{\textbf{MNIST}}} & \multicolumn{1}{c}{\text{\textbf{SVHN}}}\\
\midrule
\multicolumn{3}{l}{\text{\underline{\textit{High consensus}}}} \\
\text {H1} & \text{[4, 7, 6, 8, 4, 2, 0, 214, 4, 1]} & \text{[0, 0, 0, 0, 250, 0, 0, 0, 0, 0]} \\
\text {H2} & \text{[4, 7, 207, 10, 4, 4, 0, 10, 3, 1]} & \text{[0, 0, 250, 0, 0, 0, 0, 0, 0, 0]} \\
\text {H3} & \text{[5, 205, 7, 8, 4, 3, 0, 11, 6, 1]}& \text{[0, 0, 0, 250, 0, 0, 0, 0, 0, 0]}\\
\text {H4} & \text{[4, 7, 6, 7, 4, 200, 4, 10, 7, 1]} & \text{[0, 250, 0, 0, 0, 0, 0, 0, 0, 0]} \\
\text {H5} & \text{[4, 7, 210, 7, 4, 4, 0, 10, 3, 1]}& \text{[0, 0, 0, 0, 0, 0, 250, 0, 0, 0]}\\

\midrule
\multicolumn{3}{l}{\text{\underline{\textit{Median consensus}}}} \\
\text {H1} & \text{[5, 183, 9, 16, 4, 3, 1, 10, 17, 2] } & \text{[0, 0, 1, 0, 249, 0, 0, 0, 0, 0] } \\
\text {H2} & \text{[6, 7, 6, 30, 4, 181, 0, 10, 5, 1]} & \text{[0, 10, 1, 232, 1, 3, 0, 1, 0, 2]} \\
\text {H3} & \text{[4, 7, 6, 10, 13, 4, 0, 17, 3, 186] }& \text{[0, 0, 0, 6, 0, 243, 0, 0, 0, 1] }\\
\text {H4} & \text{[6, 18, 184, 7, 10, 4, 7, 10, 3, 1] } & \text{[236, 0, 0, 7, 0, 0, 6, 0, 1, 0] } \\
\text {H5} & \text{[7, 7, 8, 7, 4, 9, 193, 10, 4, 1]}& \text{[234, 2, 0, 4, 0, 0, 0, 1, 9, 0]}\\
\midrule

\multicolumn{3}{l}{\text{\underline{\textit{Low consensus}}}} \\
\text {H1} & \text{[12, 7, 6, 30, 4, 161, 0, 10, 19, 1]} &  \text{[1, 1, 20, 12, 0, 0, 2, 207, 7, 0]} \\
\text {H2} & \text{[4, 8, 7, 11, 38, 16, 1, 13, 8, 144] } & \text{[0, 158, 1, 6, 4, 38, 0, 40, 1, 2] } \\
\text {H3} & \text{[4, 7, 15, 33, 6, 5, 0, 171, 5, 4] } &  \text{[0, 184, 0, 2, 3, 0, 0, 61, 0, 0] }\\
\text {H4} & \text{[4, 7, 117, 99, 4, 4, 0, 10, 4, 1] } & \text{[0, 0, 24, 0, 0, 0, 0, 0, 0, 226] } \\
\text {H5} & \text{[4, 17, 6, 11, 154, 4, 0, 11, 5, 38]} & \text{[10, 1, 2, 19, 7, 109, 73, 0, 19, 10]}\\
\bottomrule
\end{tabular}}
\caption{The 30 MNIST and SVHN vote histograms sampled from the collection of histograms provided by Papernot et al~\cite{papernot2018scalable} (divided into 3 equally-sized consensus groups). We refer to histograms denoted here by H1-5 in the different consensus groups throughout the presentation of our results. 
}
\label{tab:details}
\end{table}

\section{Fittig Random Forests}
\label{sec:rf}
Every one of our teachers in Section~\ref{sec:sensitive} fits a random forest classifier using the sklearn package; each teacher performed a grid search over the following hyperparameters, and picked the values that lead to the lowest training loss.\\
\begin{itemize}
    \item max\_depth         : the maximum number of levels that a tree has, an integer chosen between 1 and 11 inclusively;
    \item max\_features      : the maximum number of features, while splitting a node, one of sqrt(number of features), log(number of features), 0.1*(number of features), 0.2*(number of features), 0.3*(number of features), 0.4*(number of features), 0.5*(number of features), 0.6*(number of features), 0.7*(number of features), 0.8*(number of features), 0.9*(number of features);
    \item n\_estimators      : the number of trees that the forest has, an integer chosen between log(9.5) and log(300.5);
    \item criterion: the loss function, one of gini impurity and entropy;
    \item min\_samples\_split : the minimum number of instance for a node to split, one of 2, 5, 10;
    \item bootstrap: one of True or False\\
\end{itemize}

\section{End-to-end sensitive-attribute inference}
\label{sec:endtoend}
In Section~\ref{sec:sensitive}, we showed that histograms leak by mounting an attack that classifies histograms to low-consensus and high-consensus groups, which reveals information about minority-group membership. In Section~\ref{sec:evaluation}, we showed that we can extract histograms by querying PATE instances. Now, we combine these two attacks, to extract minority-group membership information directly from a PATE instance. Our setting mirrors the setting from Section~\ref{sec:sensitive}, but the attacker does not have direct access to histograms of individuals, and instead they extract them from PATE's answers using our methodology (Section~\ref{sec:attack_method}). We used the same ensemble from Section~\ref{sec:sensitive}, but this time, the 250 teachers' vote histogram was noised, again using $\sigma=40$, $\delta=0.00001$ and a privacy budget of 1.9 as in ~\cite{papernot2017semisupervised}. We sampled 10 low-consensus and 10 high-consensus members of the test set, and ran the attack on them: we queried PATE with each member's data record until exhausting the privacy budget, computed the Monte Carlo estimators, ran the optimization to recover the vote histogram, and then classified it to low-consensus/high-consensus as in Section~\ref{sec:sensitive}. Results are given in Figure~\ref{fig:danger_real_attack}, and indeed, they mirror the results of the attack in Section~\ref{sec:sensitive}.


\begin{figure*}[h]
\centering
  \includegraphics[width=0.45\linewidth]{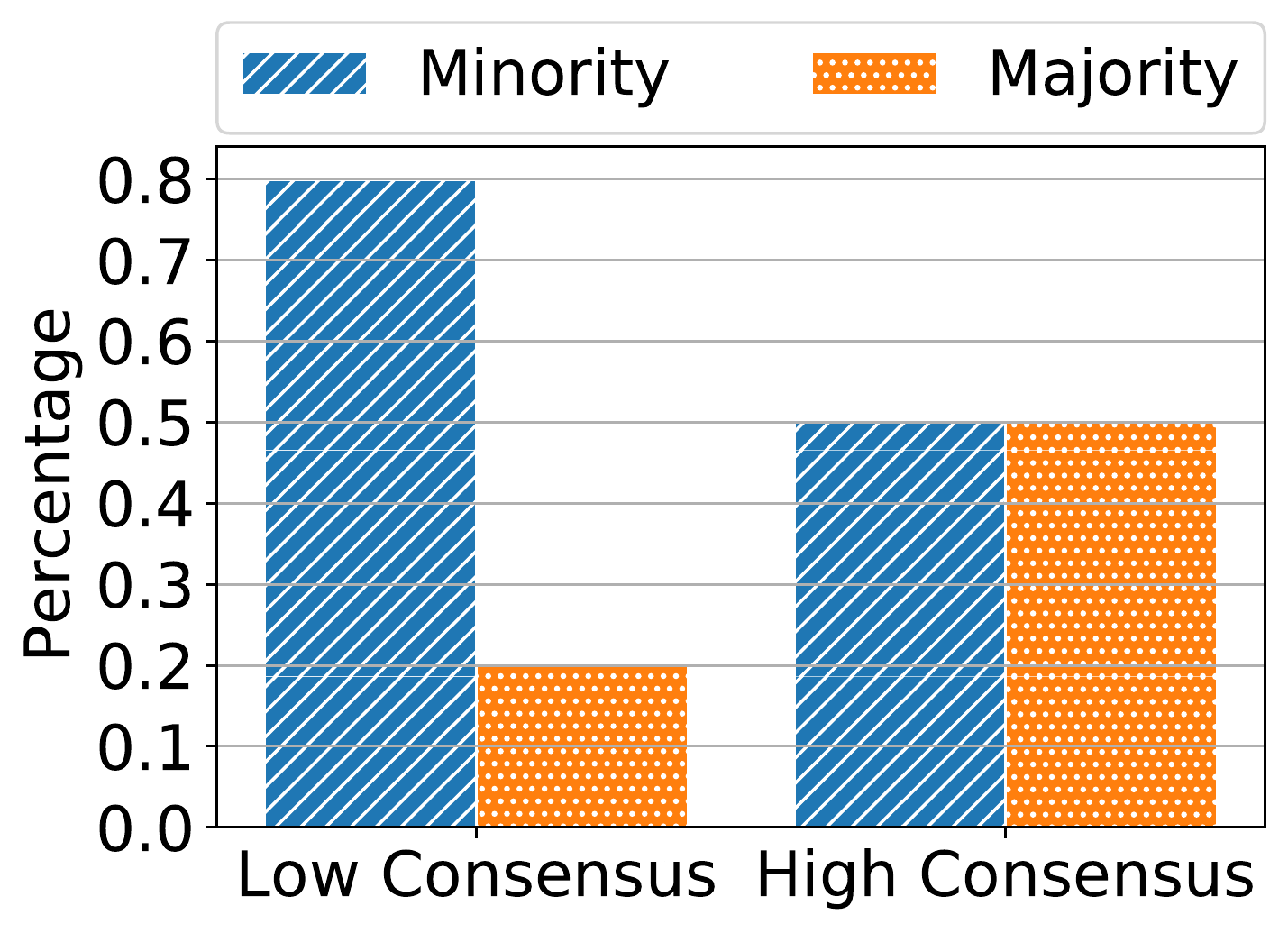}
  \caption{High vs. low-consensus distributions of the PhD-detection attack on PATE: vote histograms of minority-group members present lower consensus, allowing an attacker to identify them.\label{fig:danger_real_attack}}
\end{figure*}




\section{Edge values for noise}
Here, our purpose is to evaluate our attack given extremely low and extremely high values of $\sigma$. We repeated the query-number-limited attack from Section~\ref{ssec:experimental_setup} where adversaries perform $10^4$ queries. This time, we used a $\sigma$ value approaching 0 and a very high one (400). Figure~\ref{fig:error_baseline} shows that when noise is close to 0, the error rate is the highest, it then drops and climbs again as we increase the error. This is consistent with what we would expect: we know that when $\sigma=0$, the attacker cannot learn anything but the argmax class, whereas if $\sigma$ is infinitely large, PATE's output distribution is uniform regardless of the underlying votes, and the attacker again cannot learn anything.

\begin{figure*}[h]
\centering
  \includegraphics[width=0.45\linewidth]{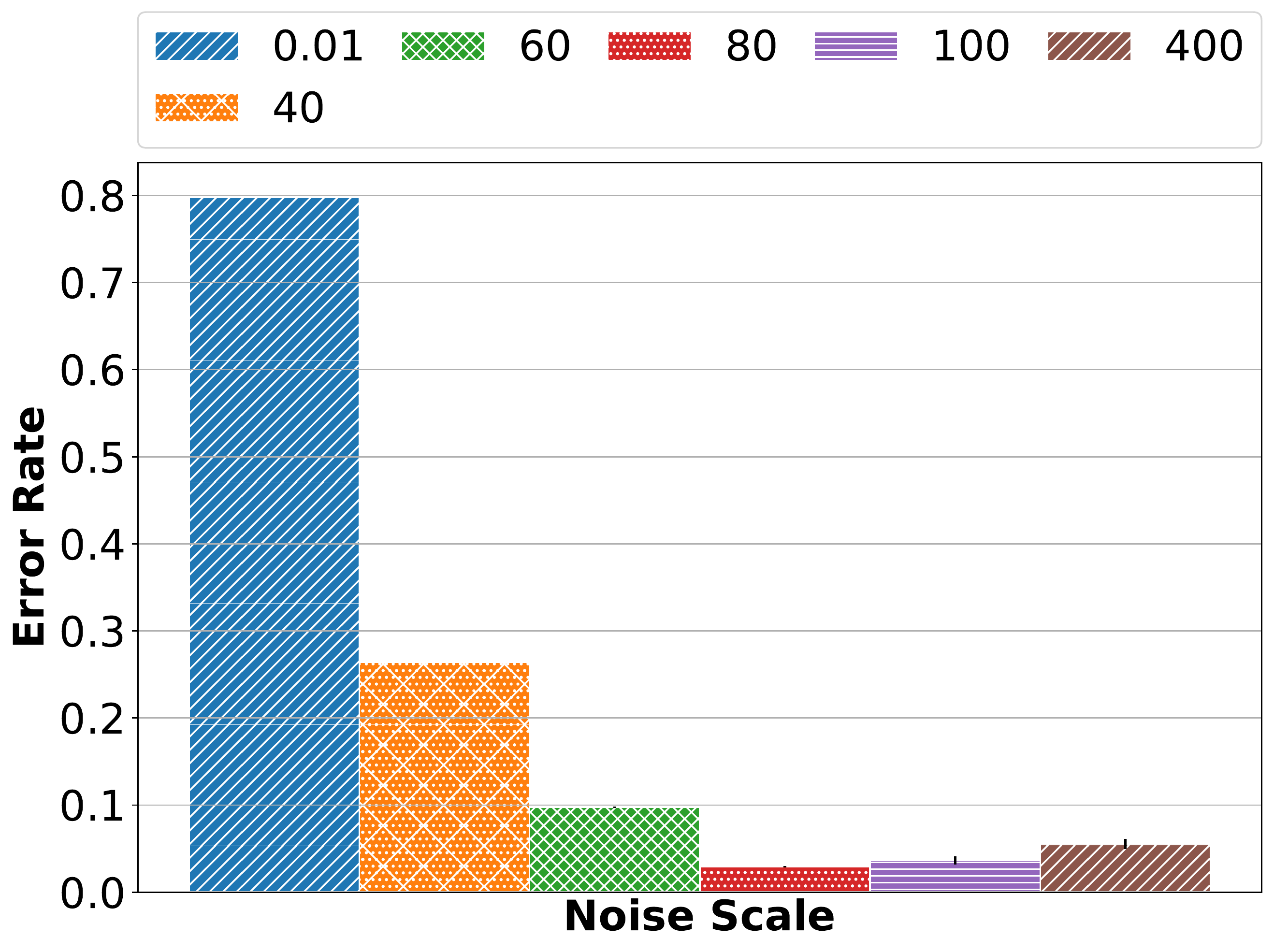}
  \caption{Error rates with baselines of a median-consensus histogram (from H3) in SVHN. When the noise is close to 0, the error is the largest; at some point, the error starts moderately increasing as the noise increases.\label{fig:error_baseline}}
\end{figure*}

\end{document}